\newtheorem{thm}{Theorem}[section]
\theoremstyle{definition}
\theoremstyle{remark}
\begin{document}

\title{Automated Conjecturing VII: \\The Graph Brain Project\\  \& Big Mathematics
}

\author{N. Bushaw, C. E. Larson$^*$, N. Van Cleemput$^{1}$}

\author{ \\with Summer 2017 workshop participants:\\ R. Barden, C. Callison, A. Fernandez$^2$, B. Harris, I. Holden,\\ D. Muncy, C. O'Shea$^3$, J. Shive, J. Raines, P. Rana, B. Ward,\\ N. Wilcox-Cook}


\address{Department of Mathematics and Applied Mathematics\\Virginia Commonwealth University\\Richmond, VA 23284, USA }
\address{(1) Department of Applied Mathematics, Computer Science and Statistics\\Ghent University\\  9000 Ghent, Belgium}
\address{(2) Trinity Episcopal School, Richmond, VA 23235.}
\address{(3) Mills E. Godwin High School, Richmond, VA 23238.}
\thanks{(*) Research supported by the Simons Foundation Mathematics and Physical Sciences--Collaboration Grants for Mathematicians Award (426267), and Virginia Commonwealth University--Presidential Research Inception Program (PRIP)}


\date{}

\maketitle

%
%
%

\section{Introduction}

Our Project began with the development of a program that can be used to generate invariant-relation and property-relation  conjectures in many areas of mathematics. This program can produce conjectures which are not implied by existing (published) theorems. 
Here we propose a new approach to push forward existing mathematical research goals---using automated mathematical discovery software. We suggest how to initiate and harness  large-scale collaborative mathematics. We envision mathematical research labs similar to what exist in other sciences, new avenues for funding, new opportunities for training students, and a more efficient  and effective use of published mathematical research.

The Graph Brain Project is an experiment in how the use of automated mathematical discovery software, databases, large collaboration, and systematic investigation provide a model for how mathematical research might proceed in the future. Our experiment is \textit{modular} and can be usefully expanded. We investigated one small open problem in graph theory. In the course of this investigation we coded many graph theoretic concepts and graphs, and computed values of many invariants for these graphs. Other researchers working on other open problems, adding their own contributions and expertise, and following their own graph theoretic interests, can leverage and supplement our code---a multiplier effect.
And our experiment in graph theory can be imitated in many other areas of mathematical research. Big Mathematics is the idea of large, systematic, collaborative research on problems of existing mathematical interest. What is possible when we put our skills, tools, and results together systematically?

Automated mathematical discovery programs are at the point where their utility  to researchers cannot be ignored.
Conjectures are the life-blood of mathematics. 
The papers \cite{LarsVanc16,HutcEtal17} include examples of automated conjectures for matrix theory, number theory, graph theory and chemical graph theory; these are of the form of bounds for matrix, integer and graph invariants. In other research we have generated conjectures for combinatorial games, intersecting set systems, and linear programs, among others: the idea is general---all you need to get going are a few coded invariants and example objects.  
That said, as we are able to coax our machines to do more and more things that historically required human ingenuity, human mathematicians will always have an essential role: computer contributions are necessarily judged by how much they help us achieve our human mathematical goals; our human questions are our yardsticks with which we measure our computer assistants.

\begin{figure}

\includegraphics[width=3.95in]{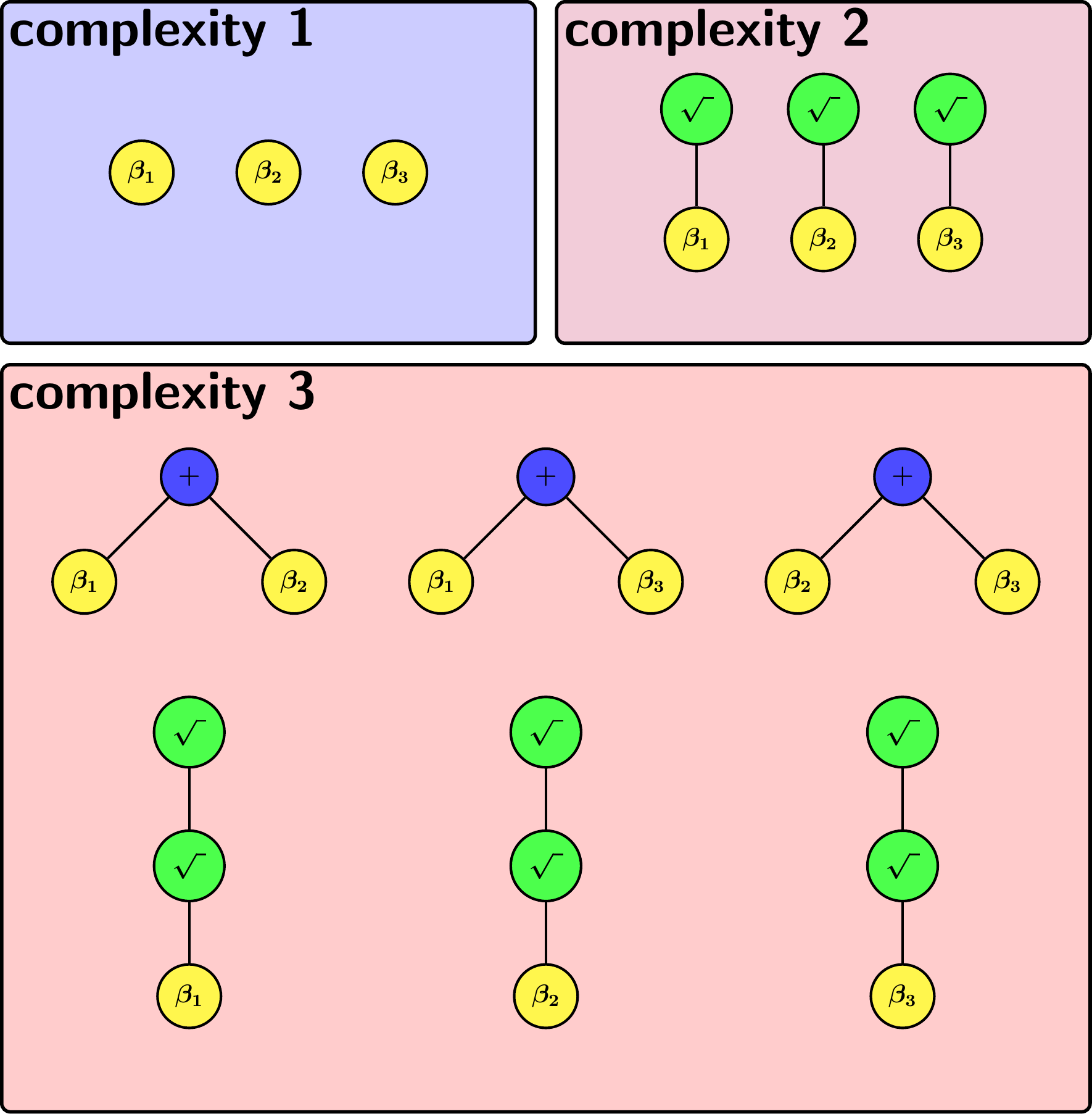}

\caption{These are all of the \textit{expressions} of complexity no more than $3$ that can be formed from invariants 
$\beta_1$, $\beta_2$, and $\beta_3$ and operators $+$ and $\sqrt{}$. 
These are real-valued functions that can be applied to objects of type corresponding to the invariants.}
\end{figure}

A central idea is that computers can easily exhaustively generate and evaluate all expressions formed from standard mathematical ingredients for relatively small numbers of example mathematical objects. 
These expressions---and their corresponding values for example sets of objects---can then be utilized in a variety of ways.
In well-defined instances it can be argued that no human can find a simpler expression satisfying certain conditions using the same mathematical ingredients. Consider the search for upper bounds for some invariant $\alpha$. For these purposes we take an \textit{invariant} to be a real-valued function of the objects. To be concrete, assume the objects are (finite) \textit{graphs} and that $\beta_1$, $\beta_2$, \ldots, $\beta_k$ are graph invariants. 
An upper bound $\beta$ of $\alpha$ will be some mathematical function of the $\beta_i$'s. This function may involve arithmetic operations, algebraic operations, or any other mathematical operations. Some real-number operators would include addition and square root ($+$ and $\sqrt{ }$). 
In this case (and if $k=3$ for the sake of example) the complexity-1 expressions would be the invariants themselves: $\beta_1,\beta_2,\beta_3$. The complexity-2 expressions would be all the expressions that can be formed from a mix of two operators or invariants. Since $\sqrt{ }$ is the only unary operator, the only possibilities are: $\sqrt{\beta_1}$, $\sqrt{\beta_2}$, $\sqrt{\beta_3}$. The complexity-3 expressions are:  
$\beta_1+ \beta_2$, $\beta_1 + \beta_3$, $\beta_2+ \beta_3$, $\beta_1 + \beta_1$, 
$\beta_2+\beta_2$, $\beta_3+\beta_3$, 
 $\sqrt{\sqrt{\beta_1}}$, $\sqrt{\sqrt{\beta_2}}$, and $\sqrt{\sqrt{\beta_3}}$ (a modern computer algebra system can identify and remove expressions equivalent due to additive commutativity, etc). A program can recursively generate all possible $\beta$'s 
up to any specified complexity. Generating expressions will face combinatorial explosion---but there is no difficulty in generating all (relatively small) human-comprehendible expressions. (Our program can generate more than 100 million expressions per second, depending on the complexity of the expressions, on a standard laptop). 
Our \textsc{conjecturing} program can either evaluate these expressions on the fly for a particular object (graph) or, better, access a database of pre-computed invariant values. These generated, evaluated expressions---together with a list of existing bounds for the invariant $\alpha$---are the main ingredients in generating conjectures that improve on all published bounds for $\alpha$. 

\begin{figure}
\begin{tcolorbox}
{\small
\begin{tabular}{c|c|c}

Objects & Invariants & Properties\\
\hline\hline
Graphs & \texttt{independence\_number},  & \texttt{is\_tree}, \\
& \texttt{radius} & \texttt{is\_hamiltonian}\\
\hline
Symmetric  & \texttt{det}, & \texttt{is\_unitary}, \\
Matrices & \texttt{max\_eigenvalue}  & \texttt{is\_positive\_definite}\\
\hline
Natural & \texttt{distinct\_prime\_factors}, & \texttt{is\_prime}, \\
Numbers  & \texttt{largest\_prime\_factor} & \texttt{is\_even}

\end{tabular}
}
\end{tcolorbox}
\caption{A user of  \textsc{Conjecturing} will need to input some examples of objects, invariants and properties.}
\end{figure}

In the case of bounds for an invariant of a mathematical object, the program functions best the more relevant invariants that it has available for its produced conjectures. That is, if there are unknown bounds that are functions of invariants known to mathematicians (and recorded in the mathematics literature) these will be produced by the program if the invariants and properties are included in the program. In particular, \textit{the program produces the simplest (in terms of complexity) bounds that are true with respect to the objects that it knows using the invariants  that it knows, and any other given constraints}.

The foundational idea of Big Mathematics is to form research groups of various sizes to work on specific mathematical problems using automated discovery tools, databases, and exploiting the mathematical literature systematically. Some members of a research group might code invariants, objects and properties. Other members can be in charge of generating conjectures (which can be done automatedly), and testing conjectures and finding counterexamples (which can be done systematically for small objects if object-generators are coded). Other members can work on proving conjectures. A group might have a library specialist (responsible for identifying all existing theorems that are relevant for an investigation, and keeping track of new concepts to be coded), a code-management and database specialist (to maintain stable code, manage versioning and code updates, and compute and store values for all coded invariants for all coded objects).
In order to maximize what is possible research groups will need to code huge bodies of published mathematical research. This research, in any mathematical sub-field, consists of large numbers of published examples of mathematical objects, invariants and properties of those objects, together with other related concepts. A nice feature is that, once coded, any other researcher or group can use and build on this work. Ideally we could build code-bases of graph theoretic knowledge that make it easy and profitable to use and extend---and enjoy network effects.


In the following sections we mention the historical context of our research---which goes back to the earliest days of computer science and artificial intelligence research. We discuss an example that demonstrates what is currently possible. And finally we discuss how this example---and our Graph Brain Project---can be ramped up to help mathematicians more quickly---and systematically---attain our shared research goals.

\section{Background}

This Graph Brain Project is motivated by our research in automated mathematical conjecturing programs---a small part of the larger area of automated mathematical discovery research. Alan Turing, in a 1948 report on ``Intelligent Machinery'', suggested mathematics as a domain to begin with in building a ``thinking machine'' \cite{Turi48}. There has ever since been some number of researchers working to automate parts of mathematics, with varying success, and in developing computer tools that provide intelligent assistance to mathematics researchers.

\begin{figure}
\begin{tabular}{ccc}
\includegraphics[width=1.2in]{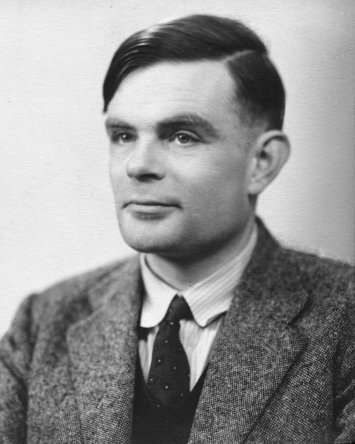} &
\includegraphics[width=1.2in]{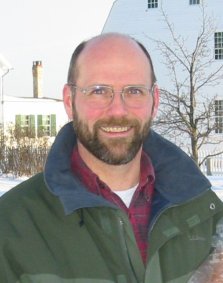} &
\includegraphics[width=2.5in]{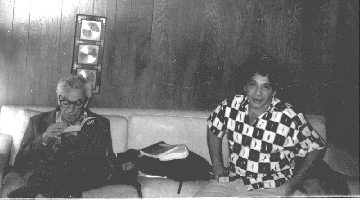}
\end{tabular}
\caption{Alan Turing; William McCune; Paul Erd\H{o}s \& Siemion  Fajtlowicz. Erd\H{o}s was as well-known for his conjectures as for his theorems}.
\end{figure}

Automated theorem proving was the first and has been the most studied area.  The first programs to prove theorems were developed in the 1950s \cite{NeweSimo58}; and the McCune/Otter 1996 computer proof of the Robbins Conjecture \cite{Mccu97} was a milestone in this area. Zeilberger has done impressive research on the automatic proof of conjectured combinatorial identities \cite{WilfZeil90}. 
The first program to make conjectures leading to published mathematical research was Fajtlowicz's Graffiti program \cite{Fajt88a}. 
Research on integer relation detection between sets of numbers has led to surprising conjectures and breakthroughs, including a new formula for the digits of the number $\pi$ \cite{BorwBail04}. 
Of course, all this is only a small part of what mathematical research consists in---or of what might be attempted. 

\begin{figure}
\begin{tabular}{ccc}
\includegraphics[width=1.5in]{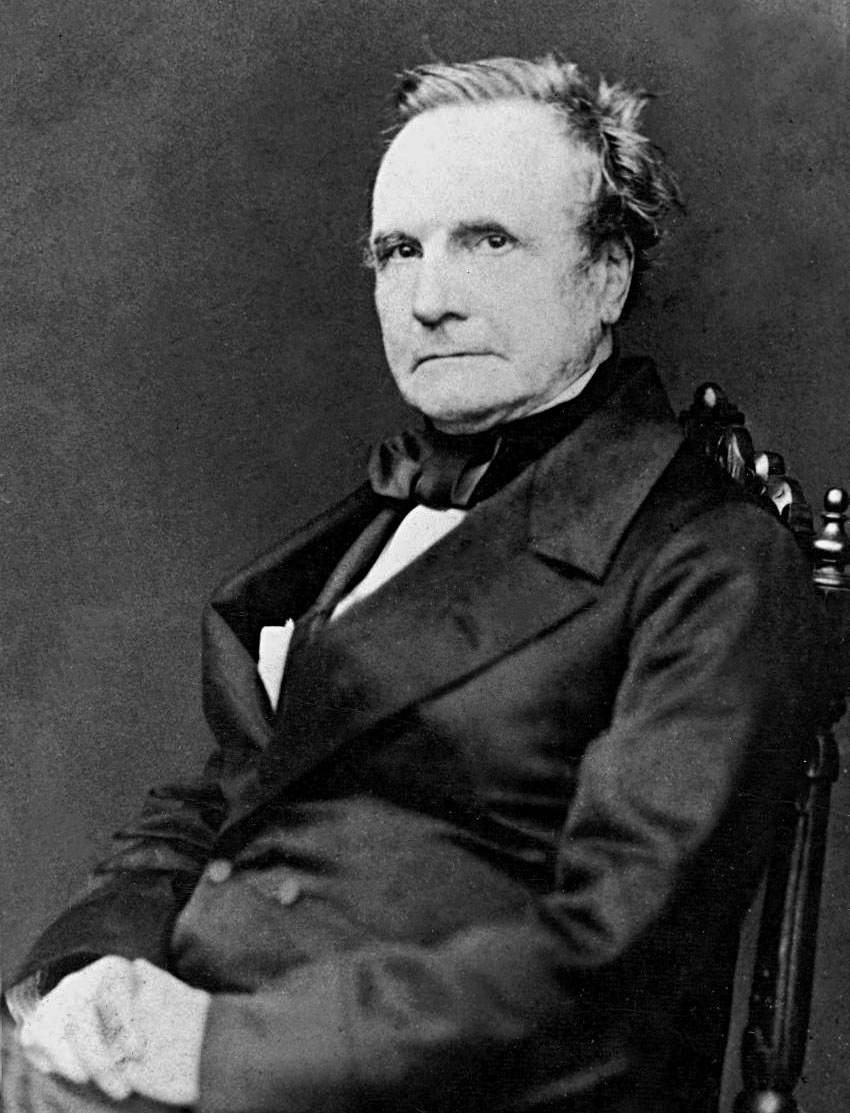} &
\includegraphics[width=1.6in]{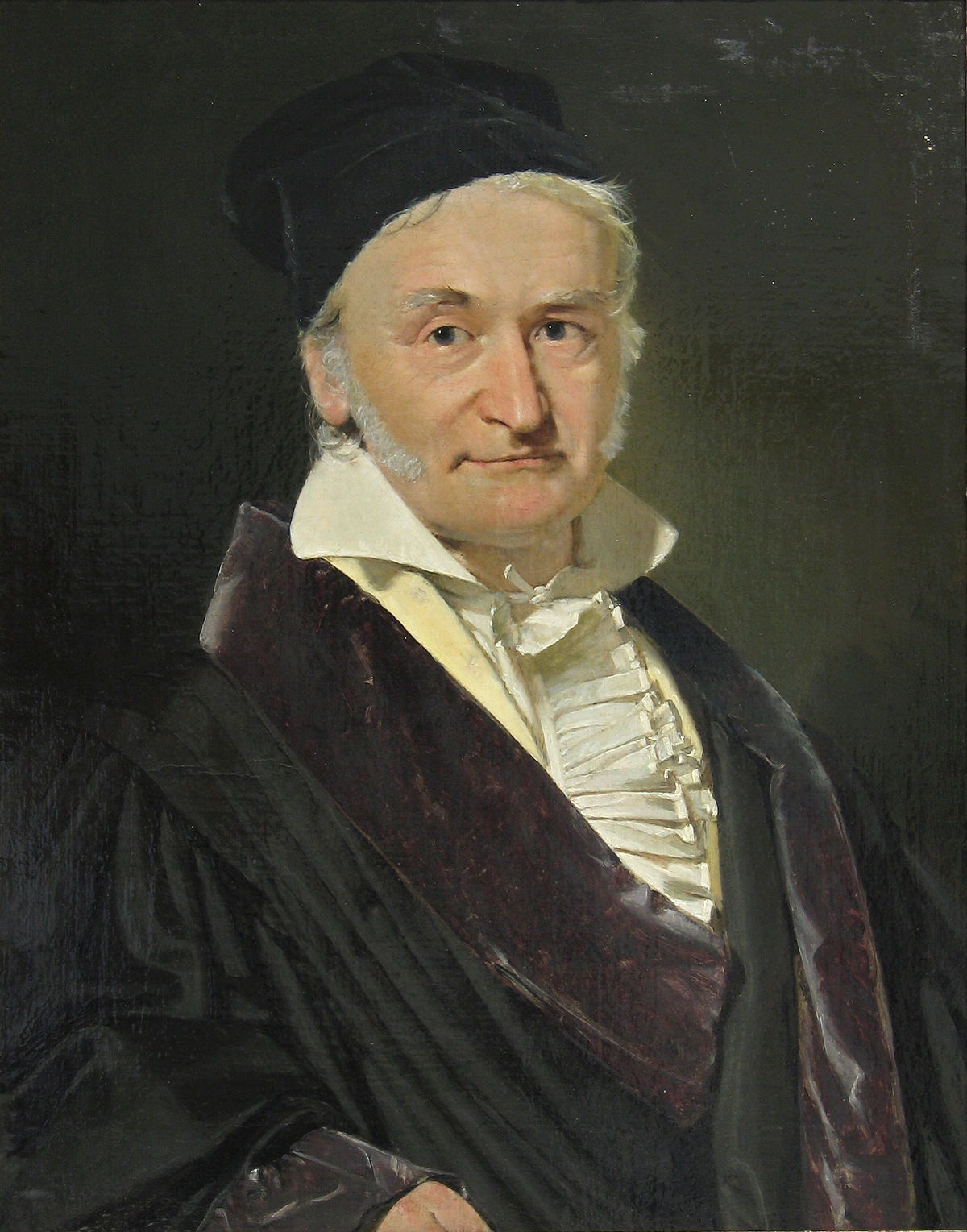} &
\includegraphics[width=1.4in]{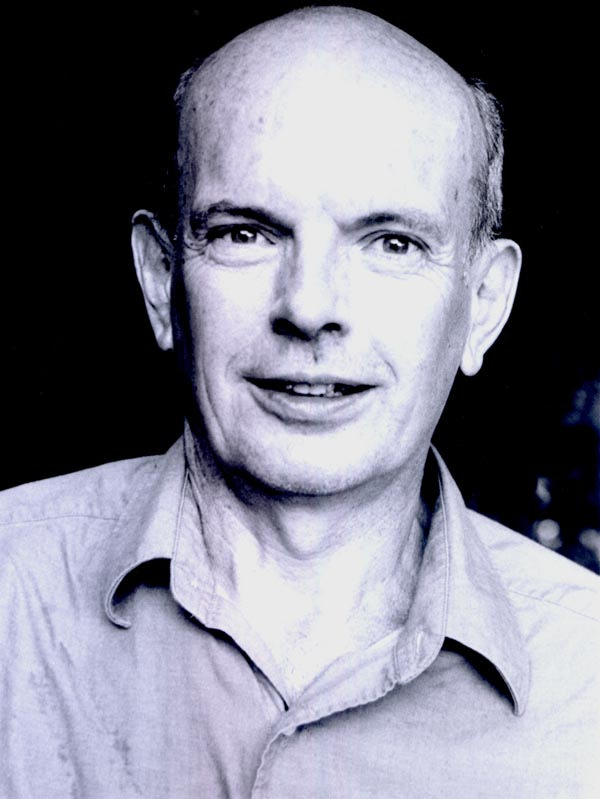}
\end{tabular}
\caption{Charles Babbage, Carl Friedrich Gauss, and Neil Sloane}
\end{figure}

We demonstrate that building and maintaining databases of non-trivial computational results---for instance, values of NP-complete graph invariants for all published graphs---will be generally useful in scientific research; this can be coordinated and standardized, and will be a component of Big Mathematics. 
The utility of significant computations has a long history in our subject, going back at least to Ptolemy's trigonometry tables, and more recent log tables \cite{CambEtal03}. It should be noted that Babbage promoted his Difference Engine  to have the advantage of producing accurate mathematical tables free of human calculating error---and this may have been the largest funded mathematics-related project ever \cite{Essi14}. Accurate computations are not only valuable for engineering purposes, but even for purely mathematical investigations: Gauss conjectured the Prime Number Theorem on the basis of the table of primes he had computed. The Online Encyclopedia of Integer Sequences (OEIS, initiated by Neil Sloane 50 years ago \cite{Sloa07}), a 21$^{st}$ century analog of Gauss' tables, which makes essential use of modern computer resources, is a familiar tool for many researchers  searching for patterns. 

\begin{figure}
\begin{tabular}{cc}
\includegraphics[width=2.5in]{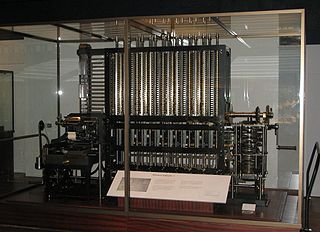} &
\includegraphics[width=2.6in]{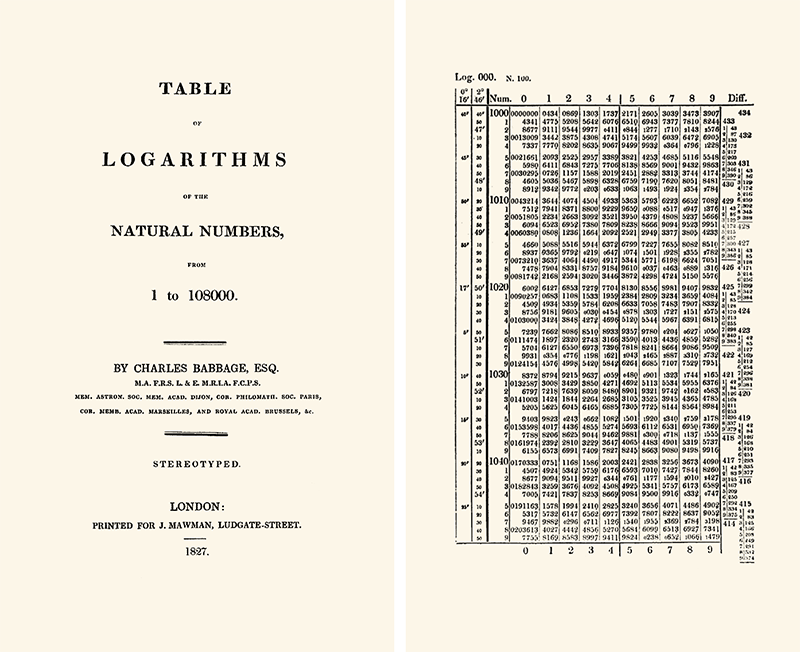}
\end{tabular}
\caption{Babbage's Difference Engine, and associated logarithm tables.}
\end{figure}

Larson and Van Cleemput have  developed a general-purpose conjecturing program---built around Fajtlowicz's Dalmatian heuristic---that has demonstrated its utility for a number of areas of mathematical research \cite{LarsVanc16}. 
Generated expressions function as conjectured bounds for an investigated invariant. These are tested for truth with respect to the stored objects. Conjectures are not stored or produced unless they imply a better approximate value  for at least one coded object than any coded theoretical bound or previously stored conjectured bound. 

\begin{figure}
\includegraphics[width=3.8in]{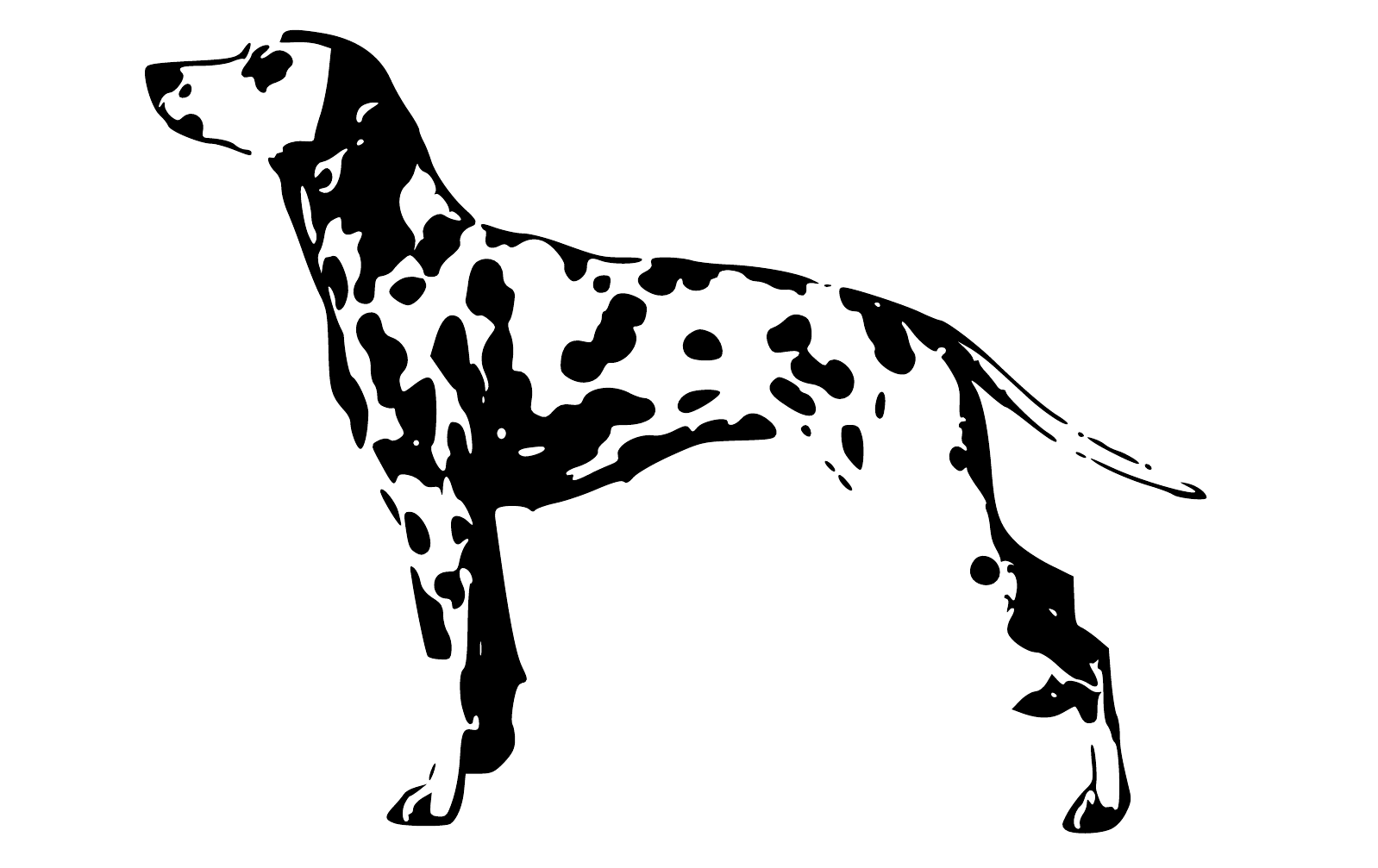} 
\begin{tikzpicture}[scale=.7]
\begin{scope}
\clip (0,6) parabola[bend at end] (4,4) parabola[bend at start] (9,3) --
      (9,1) parabola[bend at start] (4,2) parabola[bend at end] (0,3) -- (0,6) -- cycle;

\path
    coordinate (g1) at (1,5)
    coordinate (g2) at (2,4)
    coordinate (g3) at (3,4)
    coordinate (g4) at (4,2.1)
    coordinate (g5) at (5,3.9)
    coordinate (g6) at (6,3)
    coordinate (g7) at (7,2)
    coordinate (g8) at (8,1.1)
    ;

\filldraw [black] 
     (g1) circle (4pt) node[black] {}
     (g2) circle (4pt) node[black] {}
     (g3) circle (4pt) node[black] {}
     (g4) circle (4pt) node[black] {}
     (g5) circle (4pt) node[black] {}
     (g6) circle (4pt) node[black] {}
     (g7) circle (4pt) node[black] {}
     (g8) circle (4pt) node[black] {};
\end{scope}

\draw[->] (0,0) -- (9,0) node[anchor=north] {};
\draw	(0,0) node[anchor=north] {}
		(1,0) node[anchor=north] {$G_1$}
		(2,0) node[anchor=north] {$G_2$}
		(3,0) node[anchor=north] {$G_3$}
		(4,0) node[anchor=north] {$G_4$}
		(5,0) node[anchor=north] {$G_5$}
		(6,0) node[anchor=north] {$G_6$}
		(7,0) node[anchor=north] {$G_7$}
		(8,0) node[anchor=north] {$G_8$}
		(9,3) node[anchor=west] {$\min\{\alpha\texttt{ Upper Bounds}\}$}
		(9,1) node[anchor=west] {$\max\{\alpha\texttt{ Lower Bounds}\}$} 

		;
\draw[->] (0,0) -- (0,7) node[anchor=east] {$\alpha$};

\draw[dashed] (0,6) parabola[bend at end] (4,4) parabola[bend at start] (9,3);
\draw[dashed] (0,3) parabola[bend at start] (4,2) parabola[bend at end] (9,1);

\end{tikzpicture}
\caption{Fajtlowicz's Dalmatian heuristic. Graphs $G_i$ are on the horizontal axis. Conjectured bounds provide maximum and minimum values which can be used to estimate the independence number $\alpha$: the true values of $\alpha$ are spots between the curves of these theoretical ranges.}
\end{figure}

In some instances we have been able to prove the conjectures of our program---two new  \textit{theorems} are reported here. 
One attractive theorem resulted from our 2015 summer project investigating the combinatorial game Chomp \cite{BradEtal17}; several more resulted from our 2016 summer project investigating the domination number of benzenoids \cite{HutcEtal17}. 
\begin{figure}[h]
\begin{tabular}{cc}
\includegraphics[width=3in]{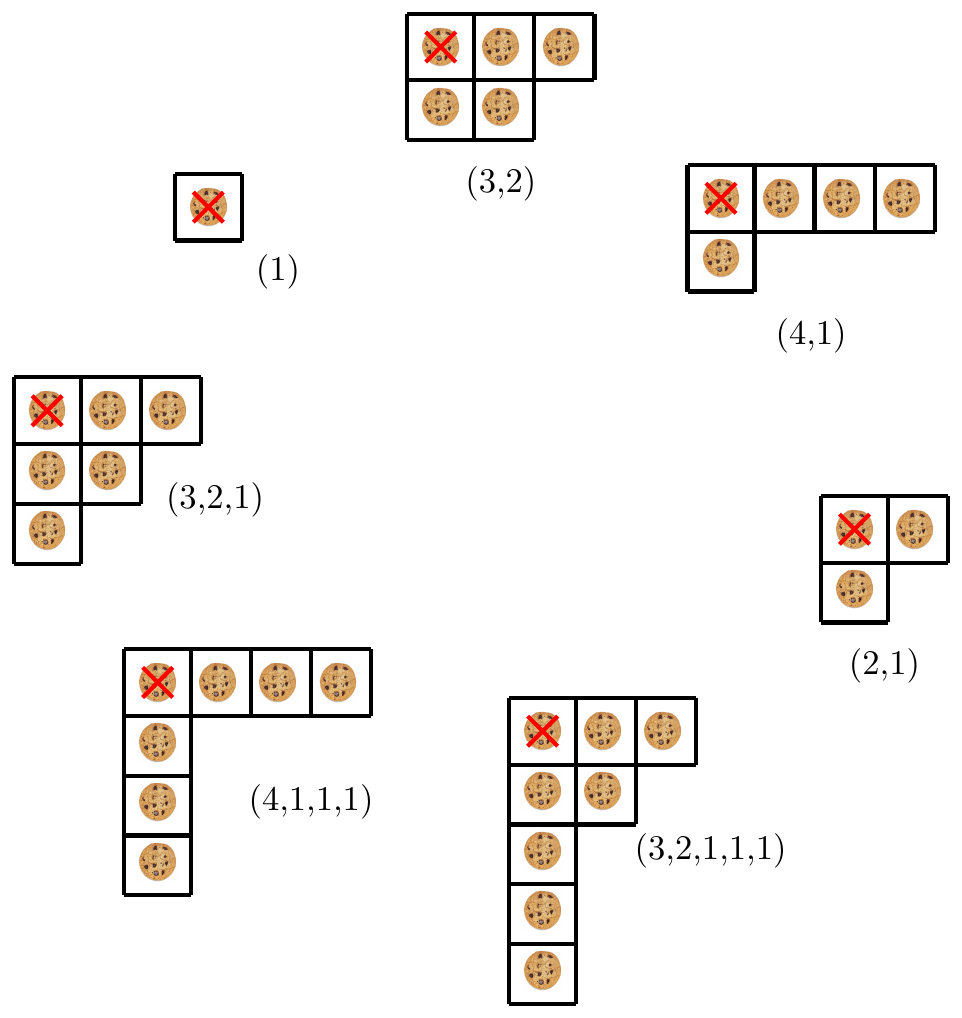} &
 \includegraphics[width=2in]{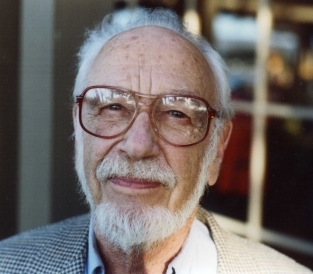}
\end{tabular}
\caption{David Gale, Chomp board positions. One conjecture led to the theorem that, for any position where the previous player to play has a winning strategy, the number of remaining cookies is at least one less than twice the number of non-empty columns.}
\end{figure}

We are graph theorists. The best approach to demonstrate the utility of the kind of research programs that we are advocating is to attempt this research for graphs. 
A \textit{graph} (or \textit{network}) is a mathematical object consisting of vertices and edges between them. Graphs are used to model many situations: these include molecular structure \cite{Dias87,CyviGutm89, FowlMano95}, the World Wide Web \cite{PageBrinMotw99}, social networks \cite{newman2002random}, and GPS satellite networks \cite{BrinCrevFrye11}.  And results in graph theory can be used as tools for proving results in other areas of mathematics: one very nice example is the proof of the Birkhoff-von Neumann theorem (that every doubly stochastic matrix can be written as a convex combination of permutation matrices) using the K\"{o}nig-Egervary theorem (that the covering number of a bipartite graph equals its  matching number) \cite{LovaPlum86}. 

\begin{figure}[h]
\begin{tabular}{cc}
\includegraphics[width=2.65in]{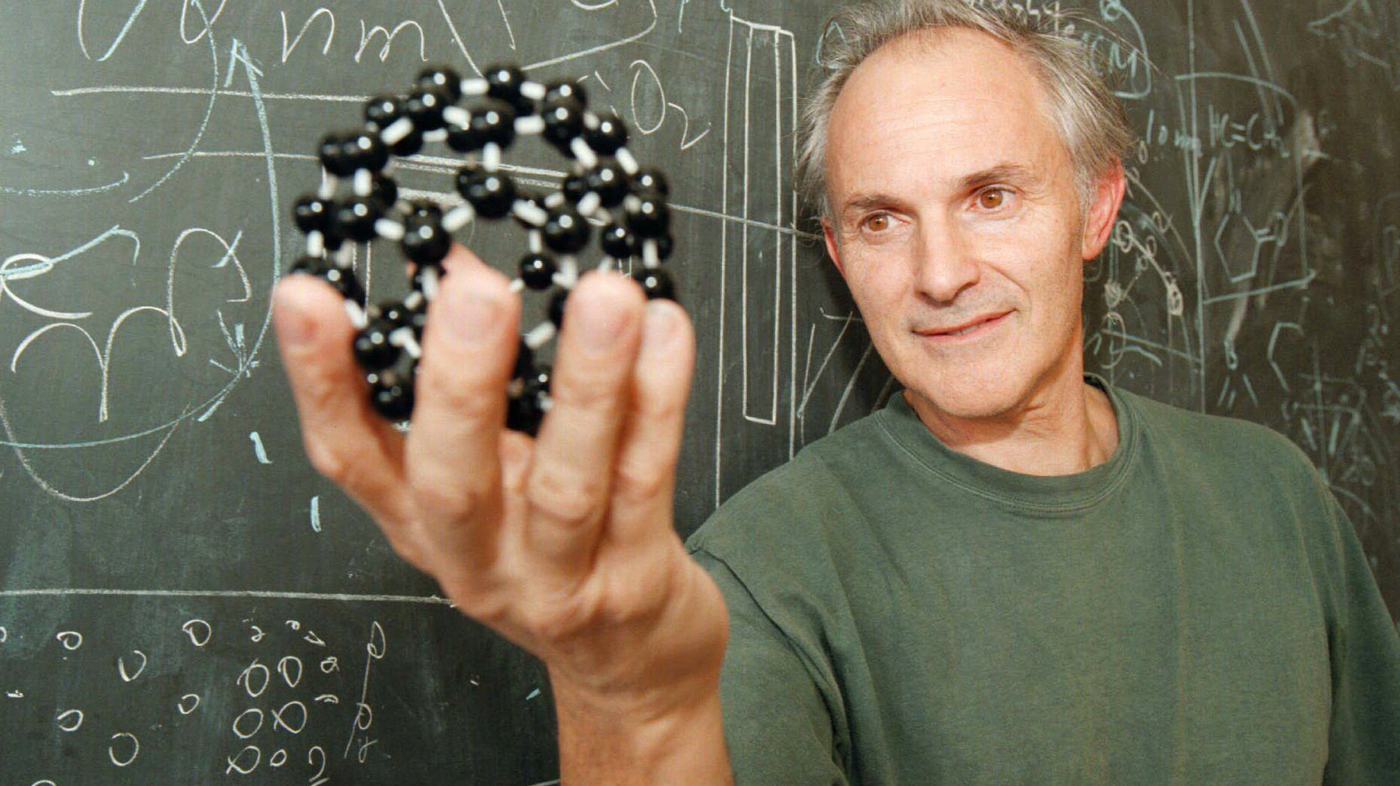} &
\includegraphics[width=1.75in]{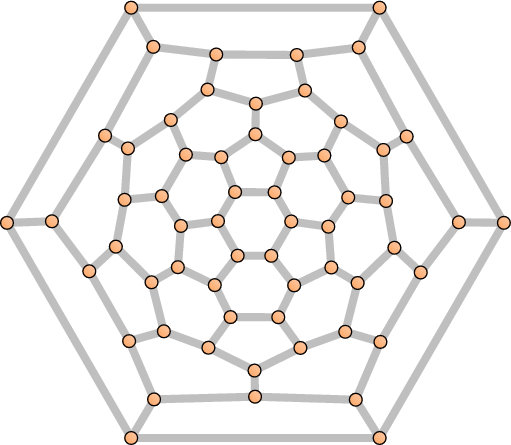}
\end{tabular}
\caption{Sir Harold Kroto, co-discoverer of fullerene molecules, holding a model of a buckyball; a graph of buckminsterfullerine $C_{60}$.}
\end{figure}

We will demonstrate the potential of our approach by investigating conjectured bounds for the independence number of a graph, a fundamental graph theory concept, intractable, and computationally equivalent to hundreds of other concepts in discrete mathematics. We have generated new conjectured bounds for the independence number of a graph which are not implied by any existing (published) bounds.


\section{Independence Number and the \textsc{conjecturing} Program}





The \textit{independence number} (or \textit{stability number}) $\alpha$ of a graph is the largest number of points in the graph where no pair of the points has a line between them. It is a widely studied hard-to-compute graph invariant which arises in a variety of situations.   Calculating the independence number of a graph can be used to optimize the configuration of a GPS network. Stable benzenoids \cite{Pepp08} and small stable fullerenes tend to minimize their independence numbers \cite{FajtLars03}. The independence number of a graph is a central concept of two of the most studied and important problems in graph theory: the P~vs.~NP question \cite{GareJohn79}, and Hadwiger's Conjecture \cite{DuchMeyn82,MaffMeyn87,Chud10}. Many families of combinatorial objects including error-correcting codes, set packings in Hamming spaces, and balanced incomplete block designs can be viewed as maximum independent sets \cite{Oste05}.

\begin{figure}[h]
\begin{tabular}{ccc}
\includegraphics[scale=.5]{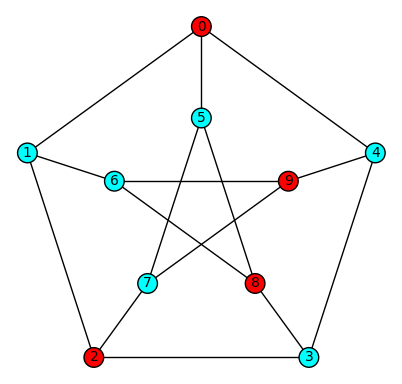}
& \hspace{.1cm} &
\includegraphics[scale=.69]{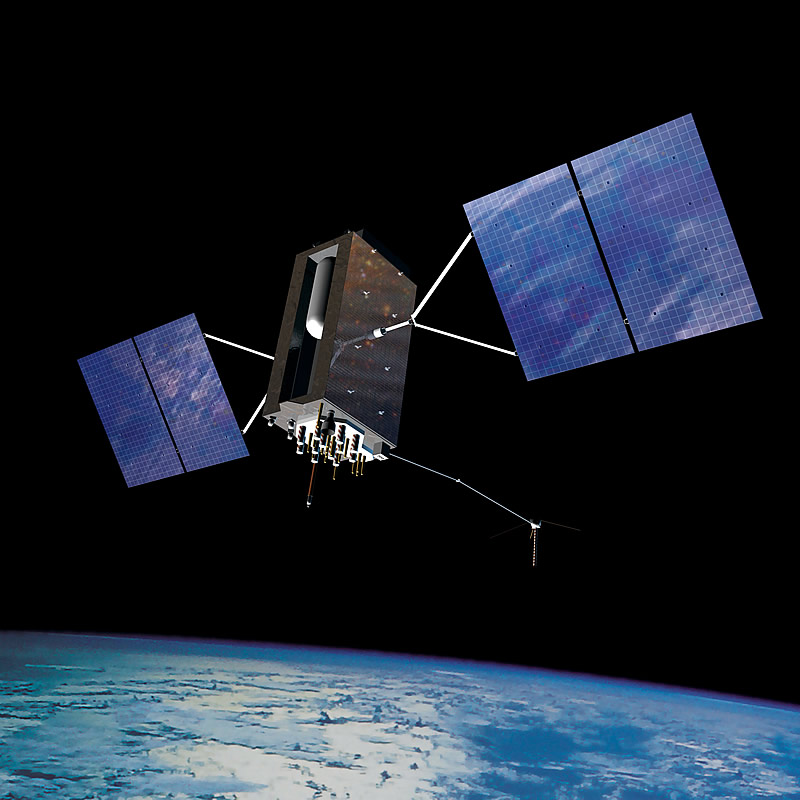}
\end{tabular}
\caption{The red vertices are a maximum independent set in the Petersen graph ($\alpha=4$). A GPS satellite: independence number calculations were used to help position the GPS III satellites.}
\end{figure}

One well-known application is the calculation of  the probability of unambiguous message transmission in a channel \cite{Lova79}. A \textit{message} consists of a string of letters. Some of these letters can be \textit{confounded} or confused; for instance ``b'' and ``d'' can be confounded. A graph can be defined consisting of the letters of the alphabet as vertices and an edge between them if they can be confounded. For a message with $k$ letters a graph can be defined with all $k$-length strings, words or messages as vertices and an edge between any pair of these strings/vertices if any pair of letters in the corresponding place between the strings can be confounded. An independent set in this graph corresponds to a  set of strings no pair of which can be confounded in any pair of places. The independence number of this graph would then represent the size of a largest dictionary of $k$-length strings  which can be sent without any risk of error. (Appropriately normalized, this number is the \textit{Shannon capacity}). This number can also be used to calculate the probability that some number of randomly chosen strings or words can be sent without error.

All existing algorithms  for finding a maximum independent set in a general graph require an exponential number of steps (in the worst case); the corresponding decision problem is NP-complete \cite{GareJohn79}.  The current boundary between possible and impossible independence number calculations in general graphs with around 2000 vertices: there is a a graph arising from error-correcting codes over an alphabet of size four, for instance, of order 2048, whose independence number has been intensively investigated by capable researchers, and is still not exactly known\footnote{\url{https://oeis.org/A265032/a265032.html}}.
Even small theoretical advances can lead to large practical payoffs.   


How can our \textsc{conjecturing} program and database of concepts, examples, theorems, and computed invariant values help? New bounds for the independence number of a graph are of both theoretical and practical interest. We can use our developed tools and resources to conjecture new bounds for the independence number of a graph, that necessarily improve on existing bounds. We can use the program to produce sequences of statements, true for all known examples in the graph theory literature, and hence unfalsifiable by any published examples. These will either admit a traditional proof or will admit counterexamples. Both theorems \textit{and} counterexamples  necessarily constitute new mathematical knowledge. 
Counterexamples, after being coded and added to our program, yield \textit{new} conjectures: because the produced conjectures must be true for all objects the program cannot re-produce a falsified conjecture. 
Newly proved bounds can be used in practical independence number calculations: in ideal cases, matching upper and lower independence number bounds can be used to exactly predict values of the independence number of a graph.



\begin{figure}
\includegraphics[width=5.2in]{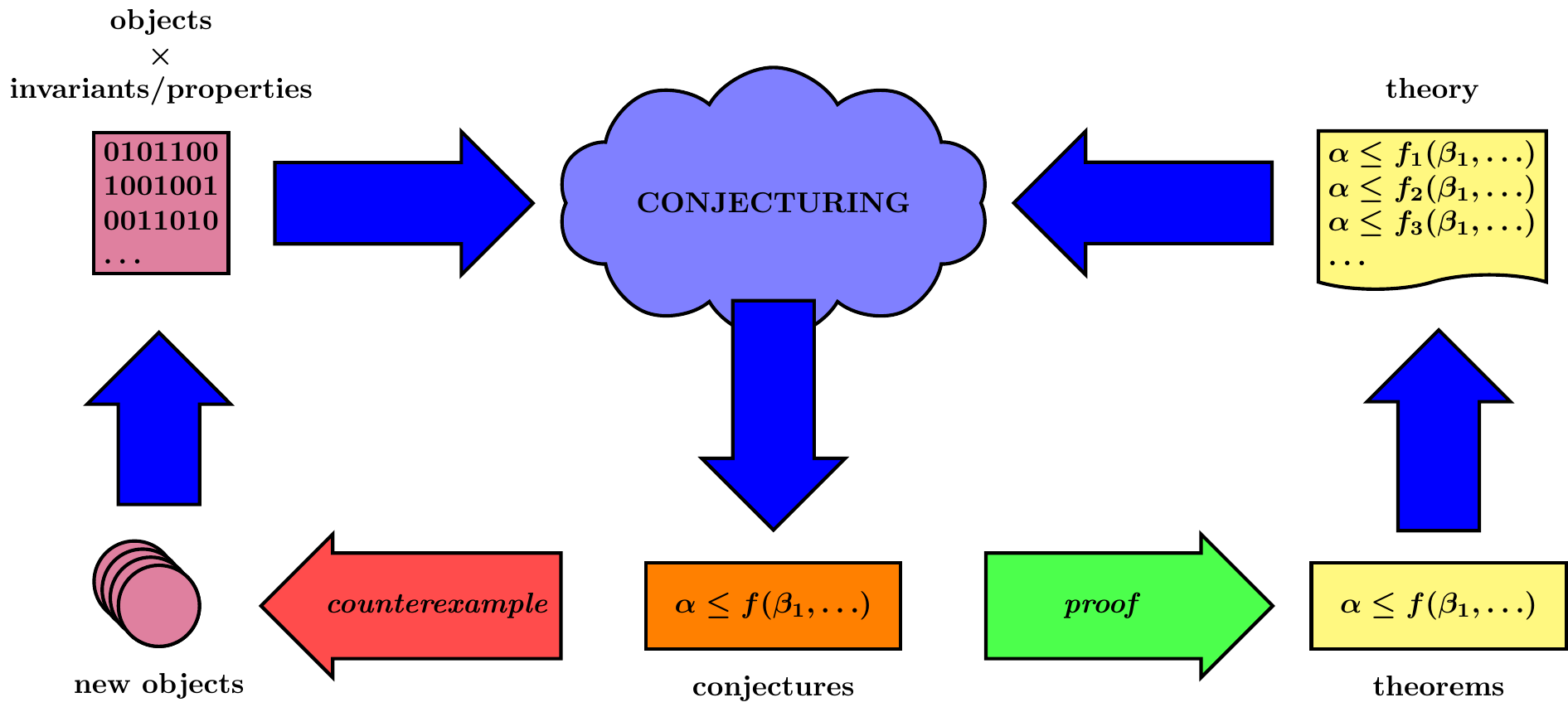}
\caption{The \textsc{conjecturing} process: (1) the program makes a conjecture, (2) if it is disproved the counterexample may be added to the program, (3) if it is true the theorem (theoretical bound) may be added to the program. In each case the process may be iterated and guaranteed to yield new conjectures.}
\end{figure}


In our Graph Brain Project summer 2017 workshop, we began with no coded theoretical knowledge---as a demonstration for the students. The program made the following not-existing-in-the-literature conjectures, which we quickly proved. 
(And these conjectures never reappeared as we began to add theorems---theoretical knowledge---to our program suggesting that these  theorems are implied by other existing theorems.)

\begin{figure}
\includegraphics[width=4.7in]{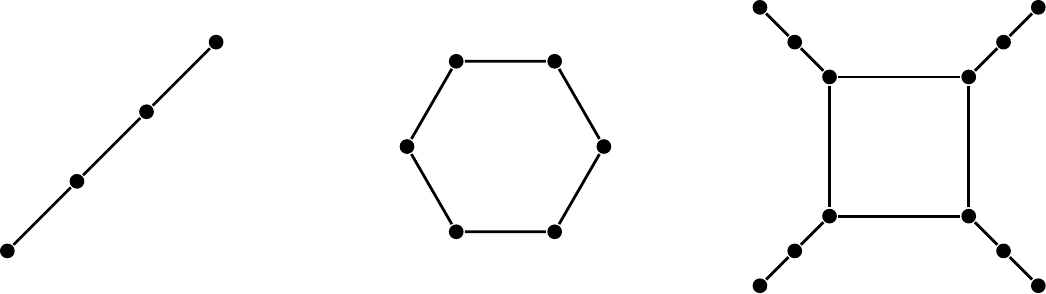}
\caption{The $r$-ciliates $C_{2,1}$,  $C_{6,0}$, and  $C_{4,2}$, with radii 2, 3, and 4, respectively.}
\end{figure}

The \textit{eccentricity} of a vertex is the maximum distance from that vertex to any other vertex in the graph.
The \textit{radius} $r$ of a graph is the minimum eccentricity of any vertex. The \textit{order} $n$ of a graph is the number of vertices of the graph. The main tool of the following proof is a theorem due to Fajtlowicz \cite{Fajt88b} that implies that every connected graph with radius $r$ has an induced subgraph of radius $r$, called an $r$-\textit{ciliate} $C_{p,q}$, consisting of a cycle with $p$ vertices with each vertex amalgamated to a path with $q$ vertices (it follows that $r=p+q$).

\begin{thm}
For any connected graph $G$, $\alpha(G) \leq n(G) -r(G)$. 
\end{thm}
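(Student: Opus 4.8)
The plan is to use Fajtlowicz's ciliate theorem to collapse the inequality onto a single, fully explicit graph and then verify it there by a matching argument. The first ingredient is a monotonicity remark: if $H$ is an \emph{induced} subgraph of $G$ and $S$ is a maximum independent set of $G$, then $S\cap V(H)$ is independent in $H$, so
\[
\alpha(G)=|S|=|S\cap V(H)|+|S\setminus V(H)|\le \alpha(H)+\bigl(n(G)-n(H)\bigr),
\]
which rearranges to $n(G)-\alpha(G)\ge n(H)-\alpha(H)$; that is, the vertex cover number $n-\alpha$ does not increase when we pass to an induced subgraph. Now take $G$ connected with radius $r$. By Fajtlowicz's theorem $G$ has an induced subgraph $H$ that is an $r$-ciliate $C_{p,q}$, again of radius $r$, so by the remark it suffices to show $n(C_{p,q})-\alpha(C_{p,q})\ge r$ --- equivalently, that every $r$-ciliate has a vertex cover of size at least $r$.

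To handle the ciliate I would use the trivial fact that a graph's vertex cover number is at least its matching number, applied to a long path sitting inside $C_{p,q}$. Starting at the tip of one cilium, walking in to its cycle vertex, once around the whole cycle, and back out along a neighbouring cilium to its tip traces a path $P$ in $C_{p,q}$ with no repeated vertex; reading off the way ciliates are built, $P$ has $2r$ or $2r+1$ vertices, so it has a matching of size $r$. Since $P$ is a subgraph of $C_{p,q}$, the matching number and hence the vertex cover number of $C_{p,q}$ is at least $r$, as required. The two degenerate situations are immediate: when there are no cilia, $C_{p,q}$ is just a cycle $C_p$ with $n-\alpha=\lceil p/2\rceil\ge\lfloor p/2\rfloor=r$, and graphs on very few vertices (radius $0$ or $1$) are checked directly.

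All of the content lives in the previous paragraph, and even there it is modest: one must confirm that the ``out--around--back'' walk really does repeat no vertex and that its vertex count lines up with the radius of the ciliate. Because Fajtlowicz's ciliates are described so concretely, this is a short parity check in $p$ and $q$, and I do not anticipate any conceptual obstacle; as a bonus, the argument shows exactly when equality $\alpha(G)=n(G)-r(G)$ holds --- for instance when $G$ is a path or an even cycle.
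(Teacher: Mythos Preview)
Your argument is correct and follows the same overall architecture as the paper's proof: both reduce to an $r$-ciliate via Fajtlowicz's theorem together with the observation that $n-\alpha$ is monotone under taking induced subgraphs (the paper writes this as $\alpha(G)\le\alpha(C_{p,q})+n'$, which is exactly your displayed inequality). The only difference is in how the inequality is verified on the ciliate itself. The paper simply notes that $C_{p,q}$ is bipartite, so $\alpha(C_{p,q})=n(C_{p,q})/2$, and then checks arithmetically that $n/2\le n-r$; you instead exhibit a path on roughly $2r$ vertices inside the ciliate and read off a matching of size $r$, giving $n-\alpha\ge\mu\ge r$. Both verifications are one-liners; the paper's is a touch cleaner since bipartiteness gives $\alpha$ exactly without tracing a walk, while yours has the small advantage of not needing the explicit formulas for $n(C_{p,q})$ and $\alpha(C_{p,q})$. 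One phrasing to tighten: ``once around the whole cycle'' should be ``around the cycle to the neighbouring cycle vertex'' (i.e.\ all but one cycle edge), since a full circuit would revisit the starting vertex --- you clearly intend this, but it is worth saying outright.
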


\begin{proof}
Let $G$ be a connected graph with radius $r$, and $r$-ciliate $C_{p,q}$ 
(with $r=p+q$). Note that an $r$-ciliate is bipartite. It is easy to check that $n(C_{p,q})=2p(q+1)$, 
$\alpha(C_{p,q})=p(q+1)$, and 
$\alpha(C_{p,q})\leq n(C_{p,q})-r(C_{p,q})$.

Let $V'=V(G)\setminus V(C_{p,q})$, 
and $n'=|V'|$. 
Then $\alpha(G)\leq \alpha(C_{p,q})+n'\leq (n(C_{p,q})-r(C_{p,q}))+n'=(n(G)-n')-r(G)+n'=n(G)-r(G).$
\end{proof}

The \textit{degree} of a vertex in a graph is the number of vertices to which it is adjacent. The \textit{maximum degree} $\Delta$ of a graph is the largest degree of any vertex. The \textit{triangle number} $T$ of a graph is the number of triangles induced by triples of vertices of a graph. The following conjecture, weak in general, gives equality for star graphs. Our database contains only connected graphs. In this case the statement holds for any graph (connected or not) and proving the general case is easier than proving the more specialized (connected) case---an observation any mathematician will recognize. 
\begin{thm}
For any graph $G$, $\alpha(G) \geq \Delta(G) -T(G)$. 
\end{thm}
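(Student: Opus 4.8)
The plan is to reduce the inequality to a statement about the subgraph induced by the neighborhood of a vertex of maximum degree. First I would fix a vertex $v$ with $\deg(v) = \Delta(G)$ and set $H = G[N(v)]$, the subgraph induced by the $\Delta(G)$ neighbors of $v$. Since $v$ is not in $N(v)$, every independent set of $H$ is also an independent set of $G$, so $\alpha(G) \geq \alpha(H)$, and it suffices to show $\alpha(H) \geq \Delta(G) - T(G)$. Note that this argument uses nothing about connectivity, which is why the general case is no harder than the connected one.

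Next I would invoke (or quickly prove) the elementary bound $\alpha(H) \geq n(H) - e(H)$ valid for every graph $H$: repeatedly select an edge and delete one of its endpoints; each deletion destroys at least the selected edge, so after at most $e(H)$ deletions no edges remain, leaving an independent set on at least $n(H) - e(H)$ vertices. Applying this to our $H$ gives $\alpha(H) \geq \Delta(G) - e(H)$, since $n(H) = \deg(v) = \Delta(G)$.

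It then remains to show $e(H) \leq T(G)$. Each edge $xy$ of $H$ has both endpoints adjacent to $v$, so $\{v,x,y\}$ induces a triangle of $G$; and the map sending the edge $xy$ to this triple is injective, because the triple determines the pair $\{x,y\}$ (the apex $v$ being fixed). Hence $e(H)$ is at most the number of triangles of $G$ through $v$, which is at most $T(G)$. Chaining the three inequalities yields $\alpha(G) \geq \alpha(H) \geq \Delta(G) - e(H) \geq \Delta(G) - T(G)$.

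I do not expect a genuine obstacle here; the proof is short and self-contained. The only point requiring a little care is the count $e(H) \leq T(G)$: one must check that triangles are being counted as vertex-triples (as $T$ is defined in the text) and that distinct edges of $H$ really do give distinct triangles. As a sanity check, the bound is sharp on stars $K_{1,n}$, where $\Delta = n$, $T = 0$, and $\alpha = n$, so that $\alpha = \Delta - T$.
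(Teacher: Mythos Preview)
Your proof is correct and is genuinely different from the paper's argument. The paper proceeds by induction on the number of edges: given a vertex $v$ of maximum degree, it finds an edge $e$ lying in a triangle but not incident to $v$, deletes $e$ to form $G'$, applies the inductive hypothesis to $G'$, and then uses $\alpha(G)\ge\alpha(G')-1$, $\Delta(G')=\Delta(G)$, and $T(G')\le T(G)-1$ to finish. Your approach is instead a direct one-shot argument: you localize to $H=G[N(v)]$, invoke the elementary bound $\alpha(H)\ge n(H)-e(H)$, and observe that $e(H)$ is exactly the number of triangles through $v$, hence at most $T(G)$. Your route avoids induction entirely and makes the mechanism of the inequality transparent (edges among neighbors of a max-degree vertex are precisely what forces $\alpha$ below $\Delta$, and each such edge is paid for by a triangle). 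The paper's inductive argument is a bit more generic in flavor and requires a small side remark to justify the existence of a suitable edge $e$; yours is shorter and arguably more informative about why equality holds for stars.
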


\begin{proof}
The statement can be verified for small graphs. Assume it is true for graphs with fewer than $m$ edges. 
Let $G$ be a graph with $m$ edges and $v$ be a vertex of maximum degree. If every edge is incident to $v$ then $G$ is a star, $T(G)=0$ and equality holds. It is also easy to see that the conjecture is true in any case where $T(G)=0$. So we can assume there is an edge $e$ not incident to $v$ in some triangle. Let $G'$ be the graph formed by removing edge $e$ (but not its incident vertices). So, by assumption, $\alpha(G') \geq \Delta(G') -T(G')$. We see that $\alpha(G')-1\leq \alpha(G)$, 
$\Delta(G')=\Delta(G)$ and that $T(G')+1\leq T(G)$. 
Then $\alpha(G)\geq\alpha(G')-1\geq(\Delta(G')-T(G'))-1\geq \Delta(G)-(T(G)-1)-1=\Delta(G)-T(G).$
\end{proof}

We now have available  520 graphs, 159 invariants, and 92 properties. Many of these graphs, invariants and properties were already coded into the Sage mathematical computing environment (\cite{sage17}, used for this research) by interested researchers. All of the graphs are either published graphs, or graphs which were counterexample to conjectures of our \textsc{conjecturing} program. Many of the graphs and invariants were coded during our 2017 summer research project.

An important feature of our \textsc{conjecturing}  program is the ability to use \textit{theoretical knowledge}. If $\beta$ is an invariant, proved to be an upper bound for an invariant $\alpha$, it can be a added to a \textit{theory} list; the program will then not include any expression $\gamma$ (invariant function) to its list of potentially output conjectures unless it is the case that there is a stored object $G$ such that $\gamma(G)$ is both less than the value of every previously stored conjectured bound for $G$ \textit{and} less than every stored theory bound. The stored conjecture, if true, is necessarily new knowledge---in the sense that it cannot be implied by the stored theoretical knowledge.

We have been collecting independence number bounds for graphs for some time: many are cataloged in \cite{Will11}. The ten bounds recorded here seem to be the most useful in practice. They should be interpreted for connected graphs (although most hold for general graphs). These can all be computed efficiently; thus the minimum of these upper bounds and the maximum of the lower bounds are themselves efficiently computable bounds.\\

\textbf{Six Upper Bounds for the Independence Number of a Graph}\\

(1) \texttt{independence number <= annihilation number} \cite{Pepp09}.

If the degrees, $d_1\leq d_2\leq \ldots d_n$, of the vertices of a graph are arranged in non-decreasing order,  the \textit{annihilation number} is then defined to be the largest index $k$ such that the sum of the degrees of the first $k$ vertices is no more than the sum of the degrees of the remaining vertices.

(2) \texttt{independence number <= fractional independence number} \cite{NemhTrot75}.

The independence number can be computed by finding the optimum value of an integer linear program. (For each vertex $v_i$ let $x_i\in\{0,1\}$. The objective is to maximize $\sum x_i$, where $x_i+x_j\leq 1$ for every edge $v_iv_j$.) The \textit{fractional independence number} is defined to be the optimal value of the relaxation of this linear program.

(3) \texttt{independence number <= Lov\'asz number} \cite{Lova79}.

The \textit{Lov\'asz number} ($\vartheta$) of a graph, introduced by Lov\'asz in 1979, has a large number of equivalent definitions \cite{Knut94}, one of which is the minimum of the largest eigenvalue of all the real symmetric matrices of the order of the graph with $1$s on the diagonal and $(i,j)$-entry $1$ whenever vertex $i$ is not adjacent to vertex $j$. It is an amazing fact that the Lov\'asz theta invariant can be computed efficiently \cite{GrotLovaSchr81}.

(4) \texttt{independence number <= Cvetkovi\'c bound}  \cite{CvetDoobSach95}.

The \textit{Cvetkovi\'c bound} is the minimum of the number of non-negative and non-positive eigenvalues of the adjacency matrix of the graph. 

(5) \texttt{independence number <= order - matching number}.

The \textit{matching number} is the largest number of edges none of which shares an endpoint with another.  This easy-to-prove, and sometimes useful,  bound seems to belong to the folklore of our subject.

(6) \texttt{independence number <= Hansen-Zheng bound}  \cite{HansZhen93}.

The \textit{Hansen-Zheng bound} is 
$\lfloor\frac{1}{2} + \sqrt{\frac{1}{4} + \text{order}^2 - \text{order} - 2\cdot\text{size}}\rfloor$. Here the \textit{size} is the number of edges of the graph.\\

\begin{figure}
\begin{tcolorbox}
{\small
\begin{tabular}{c|c|c}
Graph & Upper Bound & Value\\
\hline\hline
$K_5$ & \texttt{annihilation number} & 1\\
 & \texttt{fractional independence number} & 2\\
 & \texttt{Lov\'asz number} & 2.5\\
 & \texttt{Cvetkov\'ic bound} & 1\\
 & \texttt{order - matching} & 3\\
 & \texttt{Hansen-Zheng bound} & 1\\
 \hline
$C_5$ &\texttt{annihilation number} & 2\\
 & \texttt{fractional independence number} & 2.5\\
 & \texttt{Lov\'asz number} & 2.236\\
 & \texttt{Cvetkov\'ic bound} & 2\\
 & \texttt{order - matching} & 3\\
 & \texttt{Hansen-Zheng bound} & 3\\
 \hline
$K_{2,3}$ & \texttt{annihilation number} & 3\\
 & \texttt{fractional independence number} & 3\\
 & \texttt{Lov\'asz number} & 3\\
 & \texttt{Cvetkov\'ic bound} & 4\\
 & \texttt{order - matching} & 3\\
 & \texttt{Hansen-Zheng bound} & 3\\
 \hline
 Petersen & \texttt{annihilation number} & 5\\
 & \texttt{fractional independence number} & 5\\
 & \texttt{Lov\'asz number} & 4\\
 & \texttt{Cvetkov\'ic bound} & 4\\
 & \texttt{order - matching} & 5\\
 & \texttt{Hansen-Zheng bound} & 8\\

\end{tabular}
}
\end{tcolorbox}
\caption{Example upper bounds for the independence number $\alpha$ of selected graphs. $K_5$ and $C_5$ are the complete graph and cycle on five vertices; $K_{2,3}$ is the complete bipartite graph with partite sets of sizes two and three. The true values are: $\alpha(K_5)=1$, $\alpha(C_5)=2$, $\alpha(K_{2,3})=3$, $\alpha(Petersen)=4$.}
\end{figure}

\textbf{Four Lower Bounds for the Independence Number of a Graph}\\

(1) \texttt{independence number >= radius}  \cite{ErdoSaksSos86}.

The \textit{radius} was defined above. The proof that radius-critical subgraphs are $r$-ciliates immediately implies  this result as a corollary.

(2) \texttt{independence number >= residue}  \cite{FavaMaheSacl91}.

If the degrees of the vertices of a graph are arranged in non-increasing order, the Havel-Hakimi theorem says that the sequence formed by removing the first of these $d$ and reducing each of the next $d$ terms is the degree sequence of a graph. It follows that after iterating this procedure some number of times (and rearranging the new terms in non-increasing order) you get a sequence of $0$s. The number of $0$s is the \textit{residue} of the graph. 

(3) \texttt{independence number >= critical independence number}  \cite{Lars11}.

The \textit{critical independence number} is defined to be the cardinality of a certain independent set---and the theorem is trivial. It turn out that this number equals the independence number for a large class of graph (the K\"onig-Egervary graphs) which include the bipartite graphs.

(4) \texttt{independence number >= max\_even\_minus\_even\_horizontal}  \cite{Grig11}.

Let $v$ be any vertex. It is easy to show that the number of vertices at even distance from $v$ minus the number of edges induced by these vertices is a lower bound for the independence number. The \textit{max even minus even horizontal} invariant is the maximum of these values over all of the vertices of the graph is then also a lower bound for the independence number. Fajtlowicz defined this invariant and observed that it is very good in practice (at least for small graphs).

\begin{figure}
\begin{tcolorbox}
{\small
\begin{tabular}{c|c|c}
Graph & Lower Bound & Value\\
\hline\hline
$K_5$ & \texttt{radius} & 1\\
 & \texttt{residue} & 1\\
 & \texttt{critical independence number} & 0\\
 & \texttt{max\_even\_minus\_even\_horizontal} & 1\\
 \hline
$C_5$ & \texttt{radius} & 2\\
 & \texttt{residue} & 2\\
 & \texttt{critical independence number} & 0\\
 & \texttt{max\_even\_minus\_even\_horizontal} & 2\\
 \hline
 
$K_{2,3}$ & \texttt{radius} & 2\\
 & \texttt{residue} & 2\\
 & \texttt{critical independence number} & 3\\
 & \texttt{max\_even\_minus\_even\_horizontal} & 3\\
 \hline
 Petersen & \texttt{radius} & 2\\
 & \texttt{residue} & 3\\
 & \texttt{critical independence number} & 0\\
 & \texttt{max\_even\_minus\_even\_horizontal} & 1\\
\end{tabular}
}
\end{tcolorbox}
\caption{Example lower bounds for the independence number $\alpha$ of selected graphs. The true values are: $\alpha(K_5)=1$, $\alpha(C_5)=2$, $\alpha(K_{2,3})=3$, $\alpha(Petersen)=4$.}
\end{figure}

%

%
%
%


%

If the \textsc{conjecturing} program were given all published invariants in a mathematical field, all real-number operators used by mathematicians, and all published bounds, the program would necessarily produce new conjectures (not implied by existing theory) that are as simple as any human can produce (with respect to the objects that it knows). 
That is, if a human were to produce a simpler conjecture that is true for all objects the computer knows then, necessarily and by the design of the program, the conjecture must either be false for one of these objects or it must be implied, with respect to these objects, by the conjectures that the program does produce: that is, the produced conjectures must give bounds that are at least as good as the human conjecture. 
The program necessarily will consider every simpler conjecture: it iteratively generates and evaluates every single syntactically possible statement in order from the least complex to more complex. At the moment it considers the human's conjecture, if it does not produce the conjecture itself, it is because it is either false, or not \textit{significant} in the described sense.

The following two conjectures were generated in our summer workshop; they have been verified for all of the more than 14 million connected small-order graphs ($n\le 10$). In addition, we've used random graph generators to test these conjectures on a large sample of random graphs of assorted models (including Erd\H{o}s-Renyi graphs, random regular graphs, random bounded tolerance graphs, random interval graphs, and random bipartite graphs).  For each model, we tested many instances with a wide variety of parameters (also randomly generated within the given parameter space) and orders up to at least 100.

(1) \texttt{independence\_number >= min(girth, floor(lovasz\_theta))}

This is how an output conjecture of the program \textsc{conjecture} appears to a user. In particular it is an unquantified open sentence that must be interpreted. Since we used only connected graphs in this investigation, we interpret this over \textit{all} connected graphs: that is, For every connected graph $x$ \texttt{independence\_number(x) >= min(girth(x), floor(lovasz\_theta(x)))}. 

Here \textit{girth} and \textit{Lov\'asz theta} are graph invariants, while \textit{min} and \textit{floor} are real-number operators. The (lower) bound on the right-hand side of this inequality has complexity-4. The \textit{girth} of a graph is the number of edges of a smallest cycle in the graph; it can be computed efficiently. 

The Lov\'asz theta number is, in fact,  the best upper bound in practice for estimating the independence number of a graph; and, since the independence number is integral, the floor of this number must be an upper bound. It is interesting to note that here we have a conjectured lower bound for the independence number expressed in terms of the best upper bound. The conjecture can then be restated: for any connected graph, either the independence number is at least as big as its girth or the independence equals the floor of its Lov\'asz theta number.

%

(2) \texttt{independence\_number <= (average\_distance)\^{}(degree\_sum)}

We interpret this conjecture for connected graphs. The \textit{average distance} is the average of the distances between distinct pairs of vertices in the graph. This invariant is actually a lower bound for the independence number of a graph \cite{Chun88}. The \textit{degree sum} is the sum of the degrees of the vertices of the graph. Here the caret ``\texttt{\^{}}'' is the exponentiation operator \footnote{Jianxiang Chen suggests a proof sketch at: https://github.com/math1um/objects-invariants-properties/issues/359}.

\begin{figure}
\begin{tcolorbox}
\begin{lstlisting}[basicstyle=\tiny]
independence_number(x) >= minimum(girth(x), floor(lovasz_theta(x)))
independence_number(x) >= minimum(diameter(x), lovasz_theta(x))
independence_number(x) >= maximum(residue(x), 1/2*lovasz_theta(x))
independence_number(x) >= 2*floor(arccosh(lovasz_theta(x)))
independence_number(x) >= floor(arccosh(lovasz_theta(x)))^2
independence_number(x) >= ceil(lovasz_theta(x)) - radius(x)
independence_number(x) >= ceil(lovasz_theta(x)) - girth(x)
independence_number(x) >= floor(2*tan(matching_number(x)) - 2)
independence_number(x) >= floor(log(tan(order(x))^2)/log(10))
\end{lstlisting}
\end{tcolorbox}
\caption{Open conjectures for the lower bound of the independence number of a connected graph (that would fit this box using invariants already defined here). The full list of open upper and lower bound conjectures for the independence number may be found at: 
\protect\url{https://github.com/math1um/objects-invariants-properties/issues/421}
}
\end{figure}

\section{Big Mathematics}

Many disciplines make important use of labs and even larger-scale collaboration---Big Science.
Collaborative physics made an enormous splash recently with the discovery of gravitational waves by the LIGO consortium of more than 900 collaborating scientists \cite{AbotEtal16} (and a 2017 Nobel Prize in Physics), confirming a prediction of Einstein's theory of relativity, and pursued for more than 40 years.  

Mathematicians can also make advantageous use of labs and large-scale organization: examples of large-scale, organized, collaborative mathematics include the British WWII code-breaking groups at Bletchley Park, and (presumably) similar ongoing research at the National Security Agency (NSA).  The mathematics group at Bell (and later AT\&T) Labs could be harnessed to address problems as needed. Other impressive examples, albeit less tightly organized, might include the classification of finite simple groups and the Polymath Project. \textit{With continued research on automated mathematical discovery programs, and the development of code-bases, and mathematical databases, it is now possible to envision large-scale, organized, collaborative mathematics existing in the future. }

Our enormous mathematical knowledge bases---stored as research papers---are not being effectively or systematically exploited. Tens of thousands of mathematical research papers published each year---maybe hundreds of thousands. 
Only a small amount of this knowledge can be leveraged by any single researcher or group of researchers: the literature is simply too vast. 
Much of this knowledge can be usefully computerized so that intelligent computer assistants like \textsc{Conjecturing} can easily \textit{use} it. Many of these papers contain new concepts. 
Any of these could be useful---the real test is if they show up in conjectures that advance our mathematical goals.  
We should leverage this knowledge---by coding it---to more quickly advance our shared mathematical goals.


We maintain a database of values of invariants for most of our coded-stored graphs. Some of these values were calculated either with significant computer resources or using theoretical knowledge. Some of this overlaps other graph theory databases including House of Graphs \cite{BrinEtal13} and the Encyclopedia of Finite Graphs \cite{HoppPetr16}.  It would be useful---and more efficient---if researchers never had to repeat any of these computations. 
A universal graph theory database would be of real utility to researchers. 
 We imagine one day there may be something like a National Institute of Mathematics maintaining a variety of mathematical databases, and housing and organizing projects like this.
 

\begin{figure}
\begin{tabular}{ccc}
\includegraphics[width=1.6in]{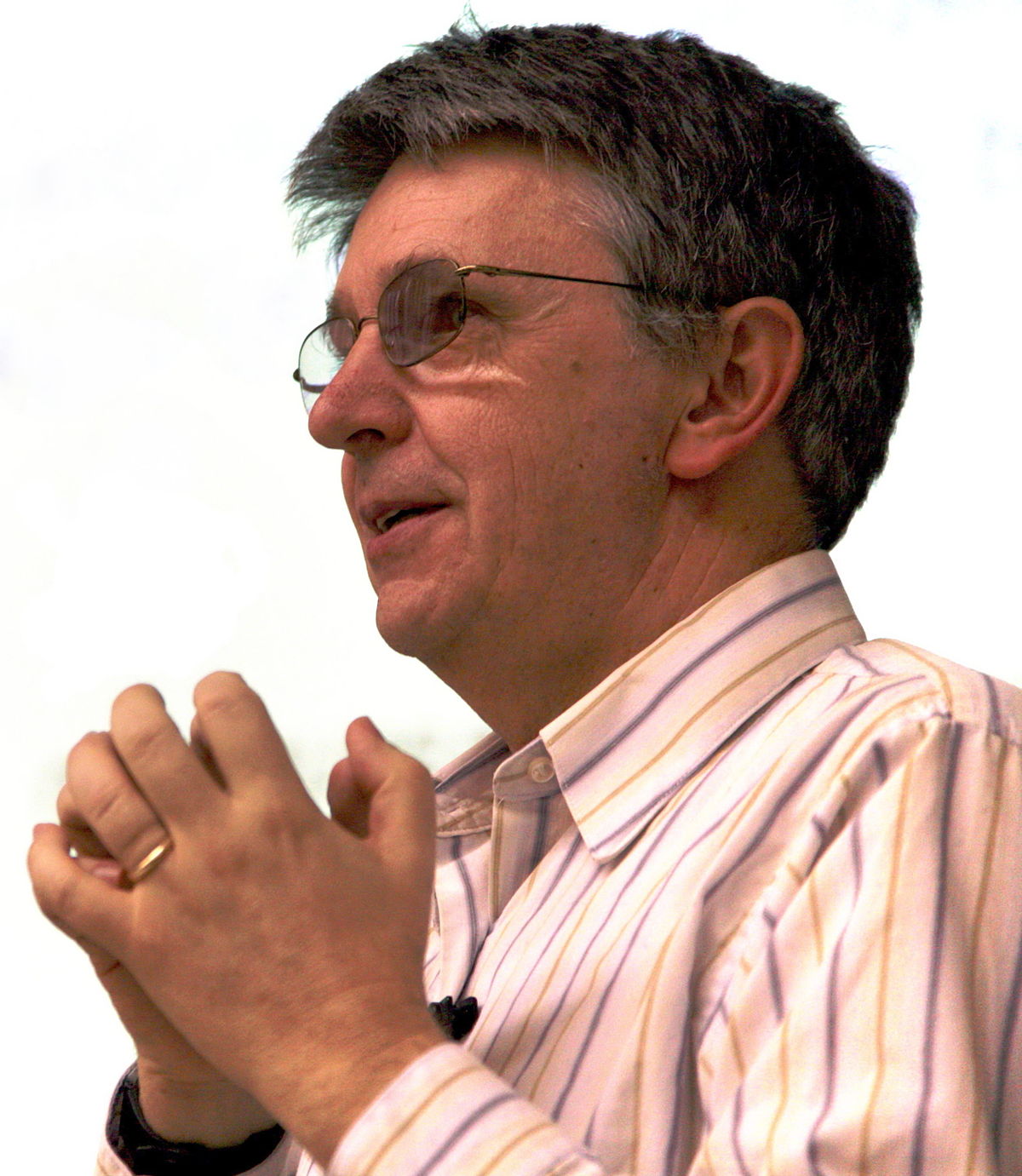}
&
\includegraphics[width=1.5 in]{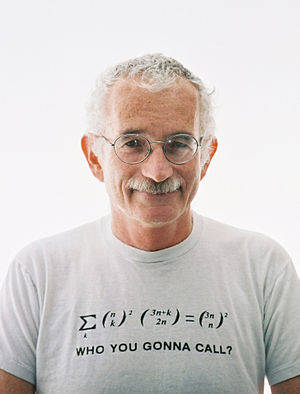}
&
\includegraphics[width=1.7in]{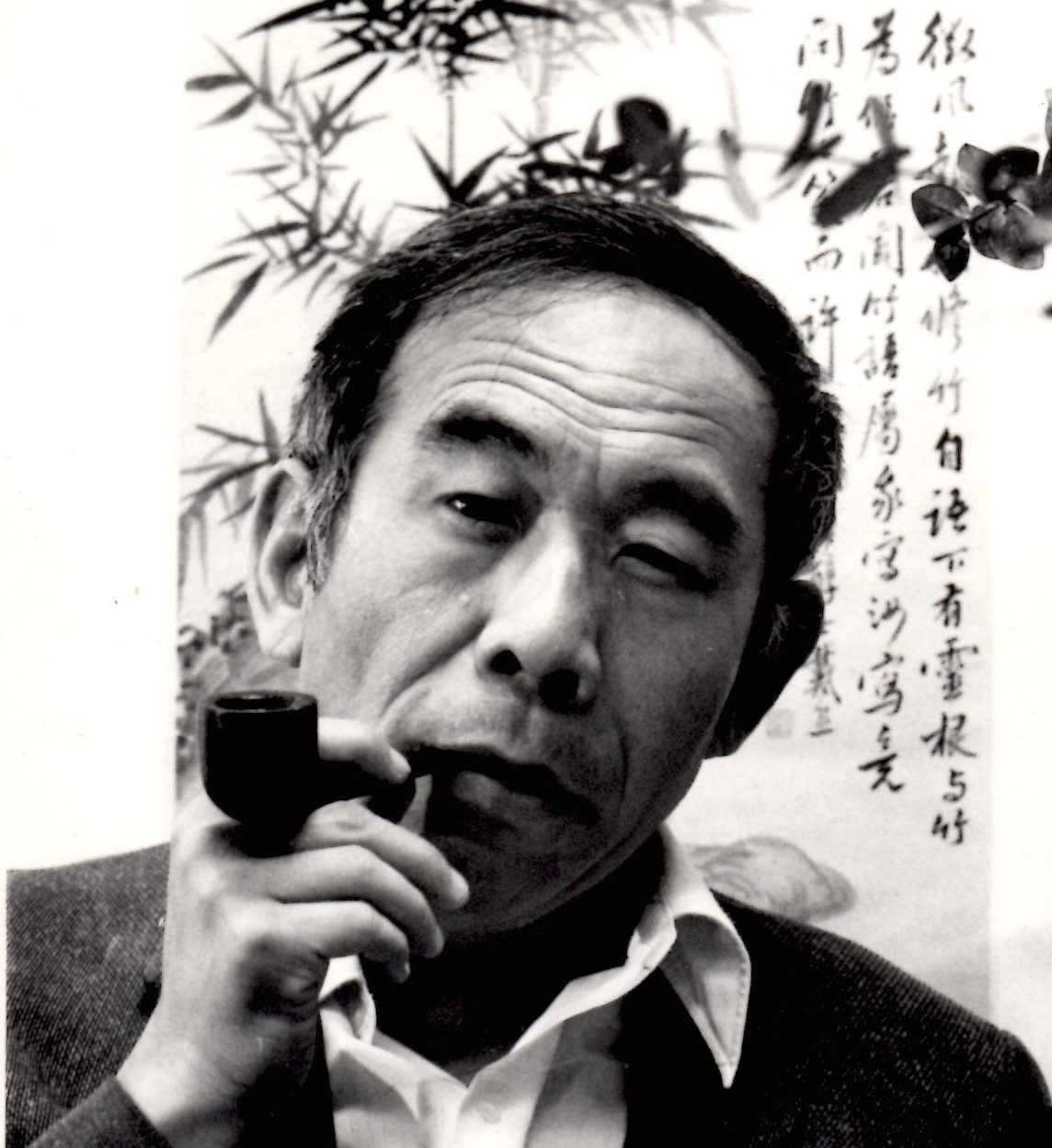}

\end{tabular}
\caption{L\'aszlo Lov\'asz, Doron Zeilberger, Hao Wang}
\end{figure}

What we have done is only a small-scale experiment, a demonstration of what is possible. It would be interesting to see the results of a \textit{large-scale} experiment. 
Continued sustained research on coding existing bounds for the independence number of a graph, generating conjectures that represent potential improvements for graphs where lower and upper bounds are not equal, proving them, adding this as a theorem, and iterating might converge on  useful and efficient independence number bounds. Even if $P\neq NP$ we might still be able to use these bounds to predict the exact value of the independence number of a graph with high probability---and this may be enough for practical purposes. Zeilberger for instance has imagined a mathematical future with results of exactly this type \cite{Zeil94}.

We have also made experiments with property-relation conjectures conjectures: these are necessary and sufficient conditions for an object to have a specified property. These conjecture types can be generated in an analogous way to the invariant-relation conjectures described so far. Examples of conjectured necessary or sufficient conditions for a graph having the property of being hamiltonian are reported in  \cite{LarsVanc17}. Much more work needs to be done here: in particular, we have coded relatively very few graph properties.


It is an important fact that successful automated discovery programs are \textit{designed} to address existing mathematical problems---and their utility is measured with respect to \textit{our} own (human) mathematical goals.
Consider for example the conjecturing program of Hao Wang. Wang was an automated mathematical discovery pioneer while he was at IBM in the late 1950s and the developer of the first conjecturing program \cite{Wang60}.  He wanted his  program to produce ``interesting'' mathematical statements---but he didn't factor in any \textit{mathematical} goal. He reported: ``The number of theorems printed out after running the machine for a few hours is so formidable that the writer has not even attempted to analyze the mass of data obtained.'' 
If some of these statements were mathematical advances Wang didn't know it. Our human goals are central to the success and (human) evaluation of our mathematical progress.

The kind of research advocated here naturally allows for the talents of researchers and students with a wide variety of abilities. Our Graph Brain Project summer 2017 workshop included students at the high school, undergraduate and graduate levels, together with faculty. The two high school students both made meaningful contributions---and learned quite a bit of graph theory along the way. They both started by coding graphs from the literature---tedious but necessary in order to achieve research literature comprehension. Both ended up doing more interesting and sophisticated coding. One of these students, with no previous coding experience, coded two different algorithms for finding the largest set of vertices in a graph that induces a bipartite subgraph. Every day  in the lab we talked about open problems at the board, discussed proof ideas, ideas for constructing counterexamples, and then chose what to work on for the day that would advance our short-term and long-term goals. 

This workshop was a natural way for researchers with a wide range of talents to work---in the same place, pushing forward research together organically, to learn, and to enjoy mathematical camaraderie. Furthermore, the natural science model of laboratories suggests ways for our students to quickly make contributions in naturally collaborative environments (which is definitely not the norm in our often isolated mathematical worlds). This might also suggest new ways to attract--and interest---a wider, more diverse,  field of mathematical talent.

Any area of mathematics where objects, invariants and properties can be coded is amenable to investigations which exactly parallel what we have described for graphs and the independence number.

\section{How to Contribute}

Two  ways to contribute to this kind of research are to either contribute to the research we have begun in graph theory, or to begin the work of coding objects, invariants, and properties for some other area of mathematics.

Our \textsc{Conjecturing} program is open-source, and written to work with Sage, an open source mathematical computational environment meant to substitute for better-known, expensive and proprietary mathematical software, and that uses Python as its interface language. This program, examples, and set-up instructions are available at: 
\url{https://nvcleemp.github.io/conjecturing/}.
Researchers in every area of mathematics can easily replicate our graph theory experiments in their own areas of research. The matrix, number and graph theory scripts we used in \cite{LarsVanc16} and other examples are also available here---these can be imitated in initial investigations. 


For graph theory we have begun to code the objects, invariants, and properties from the graph theory literature. What we've done so far includes many well-known and standard terms, familiar to all researchers. 
These are available  at:\\
\url{http://math1um.github.io/objects-invariants-properties/} \\
\noindent These are also coded for Sage. Researchers can download these, see what's been done,  and start coding---or at least add invariants to code as Github ``issues''---this a dynamic bulletin board of what needs to be coded, and what progress has been made, which any researcher anywhere can add to and comment on.  Read papers, watch talks, note new concepts and graphs and add them. It is also possible to ``fork'' what we've done. Take it, do your own thing, and build on it.
We've also pre-computed values for almost all of these invariants, for almost all of these graphs. This \textit{precomputed database} can be very useful for fast conjecturing---the program will, by default, compute any values it needs, so having pre-computed values can really speed things up.


Another way to help with our Graph Brain Project is to prove  or find counterexamples to the  open conjectures of our program: every theorem \textit{and} every counterexample count as new knowledge---and will lead to improved conjectures.
There are also many graphs for which values for certain invariants and certain properties are as-yet unknown. They need to be computed. Any new computed values can easily be added to local copies of our database and, better, if posted as a Github issue, will be included in the posted, shared copy of the database. Having these values will also improve the conjectures made by the \textsc{conjecturing} program.

In other areas of mathematics, just start! Code a few objects and invariants, and see what conjectures you get. Iterate and add. 

The computational tools we used in our graph theory investigations  included \texttt{geng} (included in the \texttt{nauty} package) for comprehensive non-isomorphic generation of all connected graphs up to any given order \cite{Mcka07}, \texttt{benzene} for the generation of benzenoid graphs \cite{BrinCapoHans02}, and \texttt{buckygen} for the generation of fullerene graphs \cite{BrinGoedMcka12}. These are very useful for searching for small counterexamples to graph conjectures. It would be useful in any other area of investigation to code similar generators for the systematic construction of example objects.

\section{Acknowledgements}

The authors are grateful for useful comments from S. Cox, R. Meagher, M. Ong Ante, J. Padden, R. Segal, N. Sloane that have greatly improved our presentation. Jianxiang Chen has been active on the Github Graph Brain Project site, finding counterexamples to conjectures



\bibliographystyle{plain}
\bibliography{../../larson.bib}
%

\end{document}